\newtheorem{definition}{Definition}
\begin{document}
\title{Nuanced Metrics for Measuring Unintended Bias \\ with Real Data for Text Classification}

\author{Daniel Borkan}
\email{dborkan@google.com}
\affiliation{\institution{Jigsaw}}
\author{Lucas Dixon}
\email{ldixon@google.com }
\affiliation{\institution{Jigsaw}}
\author{Jeffrey Sorensen}
\email{sorenj@google.com }
\affiliation{\institution{Jigsaw}}
\author{Nithum Thain}
\email{nthain@google.com }
\affiliation{\institution{Jigsaw}}
\author{Lucy Vasserman}
\email{lucyvasserman@google.com }
\affiliation{\institution{Jigsaw}}

\renewcommand{\shortauthors}{Borkan et al.}

\makeatletter
\newcommand{\myitem}[2][]{%
  \ifblank{#1}{%
  \item \textbf{#2}%
  }{%
  \item[#1] \textbf{#2}%
  }%
  \protected@edef\@currentlabelname{(\theenumi) #2}
}
\makeatother

\begin{abstract}
Unintended bias in Machine Learning can manifest as systemic differences in performance for different demographic groups, potentially compounding existing challenges to fairness in society at large.
In this paper, we introduce a suite of threshold-agnostic metrics that provide a nuanced view of this unintended bias,
by considering the various ways that a classifier's score distribution can vary across designated groups.
We also introduce a large new test set of online comments with crowd-sourced annotations for identity references.
We use this to show how our metrics can be used to find new and potentially subtle unintended bias in existing public models.

\end{abstract}

\begin{CCSXML}
<ccs2012>
<concept>
<concept_id>10010147.10010257.10010258.10010259.10010263</concept_id>
<concept_desc>Computing methodologies~Supervised learning by classification</concept_desc>
<concept_significance>500</concept_significance>
</concept>
<concept>
<concept_id>10010147.10010341.10010342.10010344</concept_id>
<concept_desc>Computing methodologies~Model verification and validation</concept_desc>
<concept_significance>500</concept_significance>
</concept>
</ccs2012>
\end{CCSXML}

\ccsdesc[500]{Computing methodologies~Supervised learning by classification}
\ccsdesc[500]{Computing methodologies~Model verification and validation}

\maketitle

\section{Introduction}

Evaluating the predictions of models is an essential part of machine learning work, and selecting the appropriate evaluation metrics for a given task is a central question for researchers in this field.
As machine learning models are being used for an ever-expanding set of use cases, many have raised concerns about the potential negative impact of unintended identity-based bias that could be present in the models \cite{compas, impossibility}.
In recent years, significant research has been published presenting new metrics intended to measure this unintended bias during model evaluation \cite{equalityofopp, disparate_impact, gender-shades}.

Our interest is in improving text classification models used to
identify toxicity in comments from online discussions, but the evaluation methods presented here can be applied to a broad range of classification applications.
``Toxicity'', defined as anything that is \textit{rude, disrespectful, or unreasonable that would make someone want to leave a conversation}, is an inherently complex and subjective classification task.
Machine learning systems, if not constrained, will often learn the simplest associations that
can predict the labels, so any incorrect associations present in the training data can produce unintended associations in the final model.
Toxicity models specifically have been shown to capture and reproduce biases common in society,
for example mis-associating the names of frequently attacked identity groups (such as ``gay'', and ``muslim'' etc.) with toxicity \cite{aies_2018,reducinggenderbias}.
This unintended model bias could be due to the demographic composition of
the online user pool, the latent or overt biases of those doing
the labelling, or the very selection and sampling process used to
choose which items to label.

Regardless of the source, we focus on measuring a particular aspect of
model unfairness - the skewing of classifier scores, and thus
output labels, due to identity related content within the text. 
We use a definition of model fairness similar to {\em equality of odds} defined in \cite{equalityofopp}.
As in that work, we assume the existence of a test set with reliable labels across a range of groups.
Given such a test set, we consider unintended bias to be present in the model if the model performance, according to relevant performance metrics, varies across the set of designated groups.
It is important to highlight that the assumption of reliable labels is significant and doesn't hold in all use cases.
We mitigate the impact of this assumption by demonstrating our results against both a synthetic test set with labels that are constructed to be reliable and a large human-annotated test set with high rating redundancy.

We propose a suite of threshold agnostic performance metrics to measure the extent of unintended model bias.
Many prior methods for measuring unintended bias in classification systems rely on selecting a threshold, a choice that can drastically change results.
In practice, classification models often return {\em scores} instead of binary classification decisions, enabling them to be used with a variety of thresholds or for ordering data.
For these models, threshold dependant metrics can obscure the view of unintended bias and thus be misleading to practitioners.
Threshold agnostic metrics capture the behavior of the underlying model
itself, and thus can allow
a more comprehensive comparison of the model's performance and
limitations. 

While the practice of building machine learning models is simplest when there are single metrics for model comparison,
unintended bias in models can be extremely nuanced and varied across groups and so single metrics are likely to obfuscate essential information.
We therefore propose a suite of five metrics, derived from ROC-AUC, Equality Gap, and Mann-Whitney U metrics,
each of which captures a different aspect of model performance, and  a different potential type of unintended bias.
Viewing a suite of metrics across a range of groups will provide much greater insight into the nuance of unintended bias, and hopefully reveal new opportunities for mitigating these biases. 

We apply these metrics with two test sets, again making the assumption that the labels are reliable.
One is a synthetic test set, identical to the one presented in \cite{aies_2018}.
The other, introduced in this work, is a new human-labeled dataset of nearly 2 million comments, specifically created for evaluation of unintended bias\footnote{https://git.io/fhpcC}.
This includes 450,000 comments annotated with the identities that are referenced in the text.

We demonstrate our proposed metrics and datasets on two publicly accessible models that are trained to detect toxicity in text (provided by the Perspective API~\cite{perspective_api}.)
One of these models is claimed to be trained using a bias mitigation technique, as described in \cite{aies_2018} and \cite{fp_blog}.
We show that our metrics and datasets illuminate unintended bias in the original model and provide new insight into the effectiveness of the bias mitigation between the models.

\section{Related Work}

Significant recent work has been published on defining how the concepts of fairness and unintended bias apply to machine learning models.
Much of this work provides metrics to quantify the presence of unintended bias according to specific definitions.
In \cite{impossibility}, \cite{tradeoffs}, and \cite{cost-of-fairness}, one can read overviews of several of these metrics and the relationship and trade-offs between these various metrics.  An industry
wide push towards
increased transparency about machine learning data sources, techniques,
and evaluation criteria, as advocated in \cite{modelcards}, has
underscored the importance of choosing the right metrics.

Recent works such as \cite{gender-shades} and \cite{compas} demonstrate the value of these metrics to highlight the potential for unfair impact with the use of machine learning models in
applications. 
In addition, several works, such as \cite{equalityofopp}, \cite{adversarial}, \cite{adversarialremoval} 
and \cite{aies_2018} provide techniques intending to mitigate
bias in machine learning models. 
While out of the scope of this work, effective unintended
bias measurement
is essential to measure progress in bias mitigation.

~\cite{reducinggenderbias} focuses on measuring and reducing 
the gender-based bias specifically.
In addition to using an approach from~\cite{aies_2018}, they
introduce measure of the false positive and negative equality
gaps, very similar to~\cite{adversarial}, that measure the
different between the true positive rates between the subgroup
and the overall background.

This is a relaxation of the {\em equalized odds} fairness constraint presented in \cite{equalityofopp} that can serve as a metric only
for a classifier that produces binary labels.
However, many models produce a probability distribution.
As these probability scores may be used in a variety of ways or with a variety of thresholds, the Equality Gap falls short for many purposes, as it evaluates the model only at one specific threshold.

The related work on counterfactual fairness \cite{counterfactual} proposes
a number of techniques to reduce classifiers affinity for assigning
attributed features to identity terms. As before, the equality
gap is measured for a particular threshold chosen to maximize the
accuracy over a designated test set.

\cite{aies_2018} introduces a threshold agnostic metric for unintended bias,
but a follow up work by the same authors, \cite{pinned-auc-limitations},
highlights several limitations of this metric.
Specifically, the metric is not robust to variations in the class distribution
between different identity groups.
In addition, with a single metric, some important information may be hidden as
different types of bias could obscure one another.

Our proposed metrics differ
from these early approaches because they are threshold agnostic, robust to class imbalances in the dataset, and 
because they provide more nuanced insight into the types of bias present in the model, as we will see in Section \ref{sec:comparison}.
\section{New Metrics}

\subsection{AUC-Based Metrics}

In this section we introduce three new metrics to measure unintended bias, based on the  Area Under the Receiver Operating Characteristic Curve (ROC-AUC, or AUC) metric.
For any classifier, AUC measures the probability that a randomly chosen negative example will receive a lower score than a randomly chosen positive example, i.e. that the two will be correctly ordered.
An AUC of 1.0 means that all negative/positive pairs are correctly ordered, with all negative items receiving lower scores than all positive items.

A core benefit of AUC is that it is threshold agnostic. 
An AUC of 1.0 also means that it is possible to select a threshold that perfectly distinguishes between negative and positive examples, i.e. that the classes are perfectly separable via the model score.

Most metrics for unintended bias rely on dividing the test data up by identity or demographic based subgroups and computing metrics for each group.
For our metrics, we also divide data by subgroup.
However, instead of calculating metrics on the subgroup data exclusively, our metrics compare the subgroup to the rest of the data, which we call the ``background'' data.

Consider the hypothetical score distributions for an example model shown in Figure~\ref{fig:distributions}.
The background score distributions are shown on the top, with negative examples in green on the left and positive examples in pink on the right.
The second (bottom) distributions shows scores for examples within a specific identity subgroup.
We can see clearly that the examples within the identity receive higher scores, both for positive and negative examples.
This score shift is one way that unintended bias can manifest in a model.
In fact, many types of unintended bias can be uncovered by looking at differences in the score distributions between background data and data from within a specific identity (although not all differences imply harmful bias). 
Using three new metrics based on AUC, we can specifically measure variations in the distributions that cause mis-orderings between negative and positive examples, i.e. variations that limit the possibility of selecting a single effective threshold. 

\begin{figure}[htb]
\includegraphics[scale=0.5]{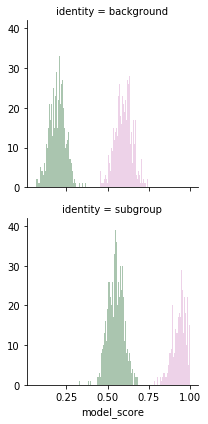}
\caption{Hypothetical model score distributions for the background data (top) and one identity subgroup (bottom).}
\label{fig:distributions}
\end{figure}

In the example shown in Figure~\ref{fig:distributions}, the dataset is divided into background and identity subgroups and negative and positive classifications, creating four distinct subsets: negative examples in the background, positive examples in the background, negative examples in the subgroup, and positive examples in the subgroup.
We define three AUCs to measure negative/positive mis-ordering between these four subsets. 

\begin{definition} Let $D^-$ be the negative examples in the background set, $D^+$ be the positive examples in the background set, $D_g^-$ be the negative examples in the identity subgroup, and $D_g^+$ be the positive examples in the identity subgroup.
\begin{eqnarray}
\textup{Subgroup AUC} & = & \textup{AUC }(D_g^- + D_g^+) \label{eq:subgroup_auc} \\
\textup{BPSN AUC} & = & \textup{AUC }(D^+ + D_g^-) \label{eq:bpsn_auc} \\
\textup{BNSP AUC} & = & \textup{AUC }(D^- + D_g^+) \label{eq:bnsp_auc}
\end{eqnarray}
\end{definition}

Table \ref{tab:metrics_examples} illustrates how each of these three AUCs is constructed by looking at a different subset of the data. Each of these three AUCs captures a unique and specific aspect of the model performance:
\begin{description}[style=unboxed]

\item[Subgroup AUC] Equation term~\ref{eq:subgroup_auc}, the calculates AUC on only the examples from the subgroup.
This represents model understanding and separability within the subgroup itself.

\item[Background Positive Subgroup Negative (BPSN) AUC] Equation term~\ref{eq:bpsn_auc} calculates AUC on the positive examples from the background and the negative examples from the subgroup.
This value would be reduced when scores for negative examples in the subgroup are \textit{higher} than scores for other positive examples, as in the example in Figure~\ref{fig:distributions}.
These examples would likely appear as \textit{false positives} within the subgroup at many thresholds.

\item[Background Negative Subgroup Positive (BNSP) AUC] Equation term~\ref{eq:bnsp_auc} calculates AUC on the negative examples from the background and the positive examples from the subgroup. 
This value would be reduced when scores for positive examples in the subgroup are \textit{lower} than scores for other negative examples. 
The examples would likely appear as \textit{false negatives} within the subgroup at many thresholds.

\end{description}

\begin{table*}[htb]
    \centering
    \begin{tabular}{|c|c|c||c|c|}
    \hline
    \multicolumn{5}{|c|}{\textbf{Unintended Bias Metrics}} \\
    \hline
    \multicolumn{3}{|c||}{\textbf{AUCs}} & \multicolumn{2}{c|}{\textbf{Average Equality Gaps}} \\
    \hline
    Subgroup & Background Positive & Background Negative  & Negative & Positive\\
    AUC & Subgroup Negative & Subgroup Positive & AEG & AEG \\
    & (BPSN) AUC & (BNSP) AUC & & \\
    \hline
    $\mbox{AUC}(D_g^- + D_g^+)$ &
    $\mbox{AUC}(D^+ + D_g^-)$ &
    $\mbox{AUC}(D^- + D_g^+)$ & 
    $\frac{1}{2} - \frac{\mbox{MWU}(D_g^-, D^-)}{|D_g^-||D^-|}$	&
    $\frac{1}{2} - \frac{\mbox{MWU}(D_g^+, D^+)}{|D_g^+||D^+|}$	\\
    \hline
    \includegraphics[scale=0.25,trim={0 10.3cm 26cm 0},clip]{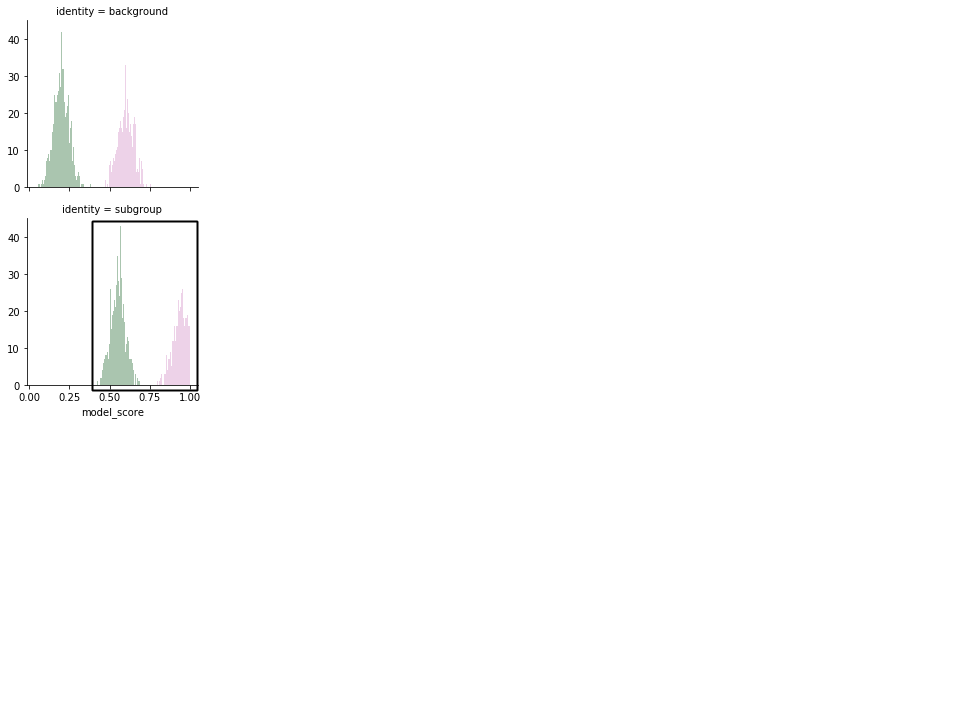} & 
    \includegraphics[scale=0.25,trim={0 10.3cm 26cm 0},clip]{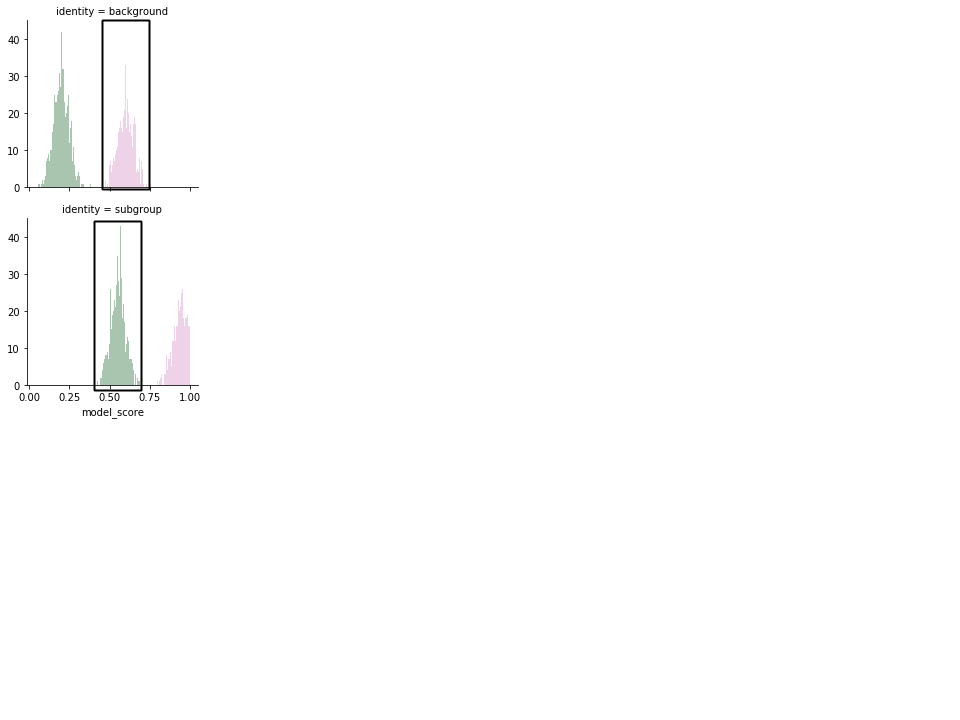} & 
    \includegraphics[scale=0.25,trim={0 10.3cm 26cm 0},clip]{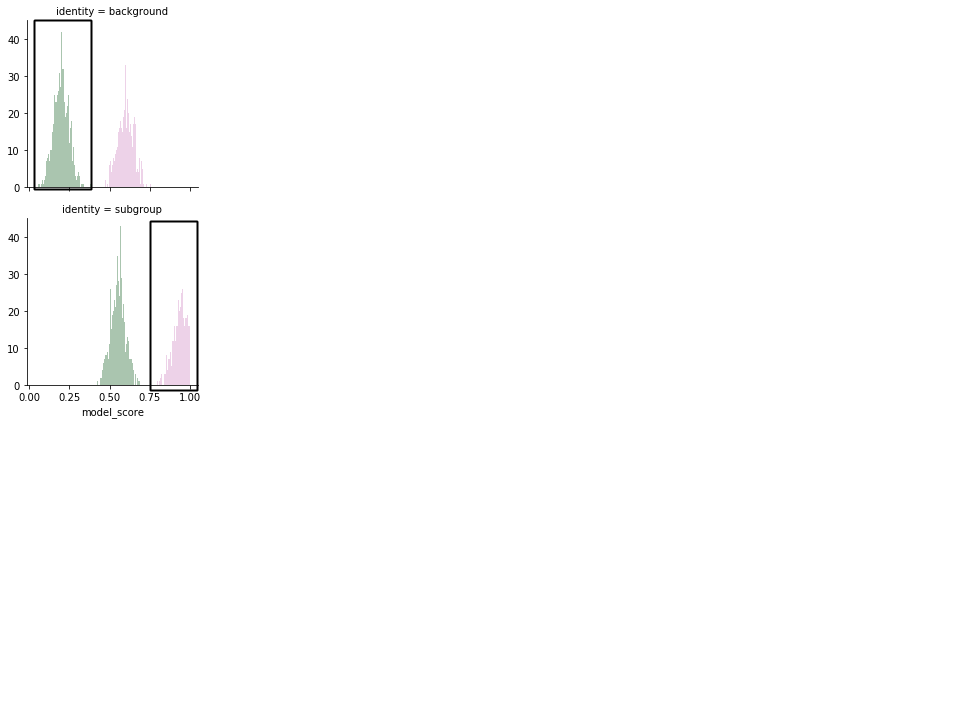} & 
    \includegraphics[scale=0.25,trim={0 10.3cm 26cm 0},clip]{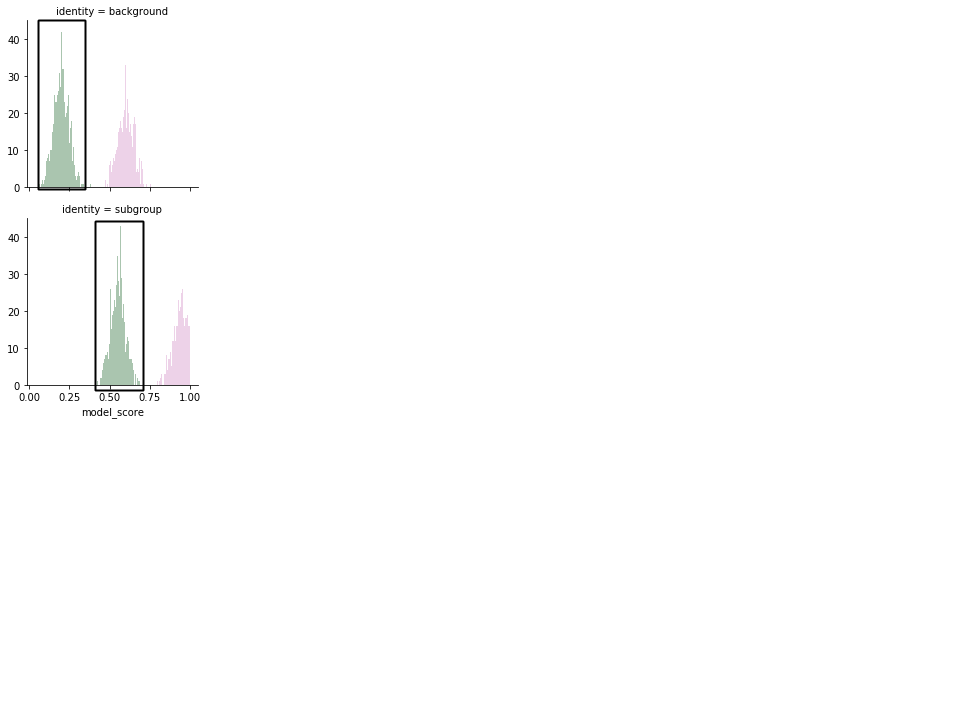} & 
    \includegraphics[scale=0.25,trim={0 10.3cm 26cm 0},clip]{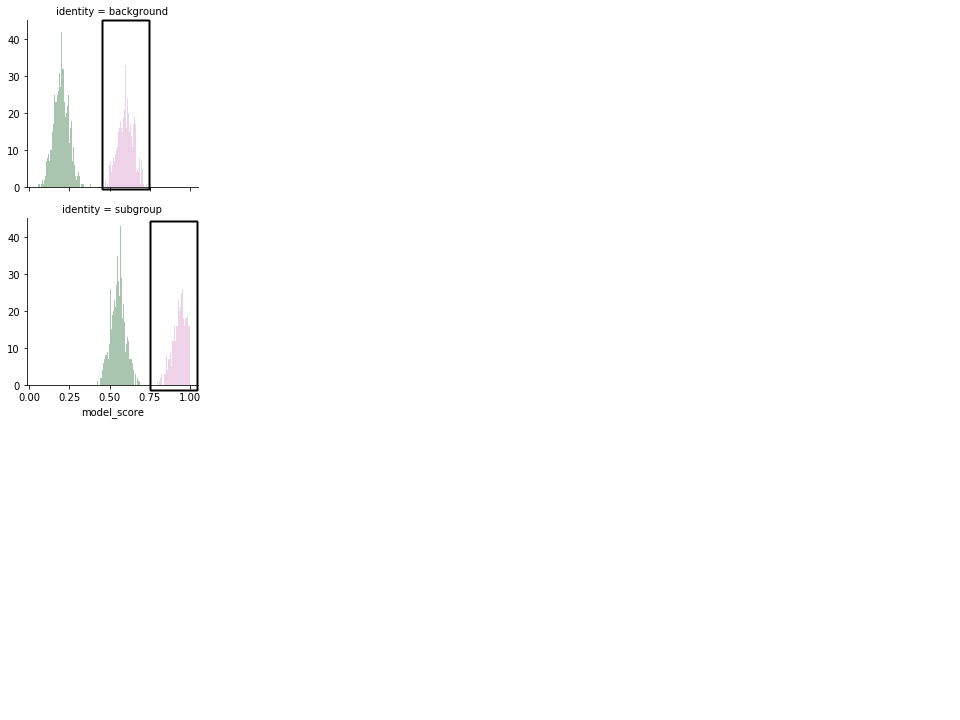} \\
    \hline
    \end{tabular}
    \caption{The full suite of unintended bias metrics. Highlighted distributions indicate which data is used to calculate each metric.}
    \label{tab:metrics_examples}
\end{table*}

Looking at these three metrics together for any identity subgroup will reveal how the model fails to correctly order examples in the test data, and whether these mis-orderings are likely to result in false positives or false negatives when a threshold is selected.

An important quality of the AUC metric is that it is robust to data imbalances in the amount of negative and positive examples in the test set.
This is especially relevant when measuring unintended bias, because in real-world data, the amount of examples in each identity subgroup, and the balance between negative and positive examples can vary widely across groups (in fact, this variation is often a source of bias).
Enforcing that for each AUC, either all negative or all positive examples (or both in Subgroup AUC) come from one identity group, means that mis-orderings involving that particular subset cannot be drowned out by results from other groups, ensuring that these metrics are robust to data imbalances likely to occur in real data. 

\subsection{Average Equality Gap}
We now introduce two additional threshold agnostic metrics, building from a strict generalization of the Equality Gap metric.

\begin{figure}[h]
\includegraphics[scale=0.2]{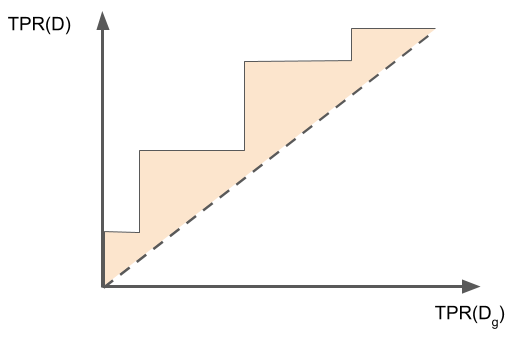}
\caption{An illustrative plot of the true positive rate of the subgroup and background distributions of a hypothetical classifier.
This shaded area can be captured by the AEG.
}
\label{fig:tpr}
\end{figure}

The Equality Gap is the difference between the true positive rates of the subgroup ($\mbox{TPR}(D_g)$) and the background ($\mbox{TPR}(D)$), at a specific threshold. 
Consider \ref{fig:tpr}, which plots these rates against each other for every possible threshold $t$, for some hypothetical classification model.
The hypothetical classifier is biased against the subgroup as $\mbox{TPR}(D_g) < \mbox{TPR}(D)$ at different levels at different thresholds. The shaded area captures the average bias across all thresholds for the classifier. 
\begin{definition}[\mbox{Positive Average Equality Gap}]
For each threshold, $t$, if you plot the true positive rate of the subgroup as $x(t)$ and the true positive rate of the background as $y(t)$ then the Positive Average Equality Gap is the area between the curve $\left(x(t), y(t)\right)$ and the line $y = x$, i.e.
\begin{equation}
\textup{Positive AEG} = \int_0^1 \left( y(t) - x(t) \right) \mathit{dx(t)} 
\end{equation}
\label{def:aeg_1}
\end{definition}
There is also the analogous definition with true negative rates in place of true positive ones. This would correspond to the other component of equalized odds. We call this the \emph{Negative Average Equality Gap}. Together, we call the pair the Average Equality Gap metrics (AEGs).

\subsubsection{Relationship to Mann-Whitney $U$ Metrics}

Another way to generalize the Equality Gap metric is from the perspective of the separability of the score distributions,
similar to the AUC metrics in the previous section.
With the AUC metrics, we measured mis-orderings between positive and negative examples across the subgroup and background, with the goal of few mis-orderings or high separability.
For the AEGs, we compare positive examples from the subgroup with positive examples from the background, with the goal of \textit{low} separability. 
In particular, if Equality of Opportunity held for our classifier at every threshold then that would mean that, if a point $i$ (with model score $\hat{Y}_i$) were chosen uniformly at random from the positive background data $D^+$ and a point $j$ (with model score $\hat{Y_j}$) were chosen uniformly at random from our subgroup data $D_g^+$, then: 
\[
P\left\{\hat{Y_i} > \hat{Y_j} | Y_i \in D^+, Y_j \in D_g^+\right\} = \frac{1}{2}
\]
That is to say, given that both data points are positive examples from the two distributions, the probability that either score is higher than the other should be the same.
Again, as a metric, we express this as:

\begin{definition}[Positive Average Equality Gap]
If a point $i$ (with model score $\hat{Y_i}$) were chosen uniformly at random from the background data $D^+$ and a point $j$ (with model score $\hat{Y_j}$) were chosen uniformly at random from our subgroup data $D_g^+$, then the average equality gap is:

\[
\textup{Positive AEG} = \frac{1}{2} - P\left\{\hat{Y_i} > \hat{Y_j} | Y_i \in D^+, Y_j \in D_g^+\right\}
\]
\label{def:aeg_2}
\end{definition}

We can rewrite the above definitions using the Mann-Whitney $U$ test statistic \cite{auc} to come up with an efficient closed form for computing the Positive Average Equality Gap: 

\begin{definition}[\mbox{Positive Average Equality Gap}]
\[
\textup{Positive AEG} = \frac{1}{2} - \frac{\textup{MWU}(D_g^+, D^+)}{|D_g^+||D^+|}	
\]
\label{def:aeg_3}
\end{definition}

Analogously, the equivalent definitions of Negative Average Equality Gap follows from simply substituting the negative datasets for the positive ones in the above definitions. 

Finally, we show:

\begin{theorem}
All definitions of Average Equality Gap are equivalent.
\end{theorem}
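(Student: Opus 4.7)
The plan is to show that all three expressions reduce to the same quantity $\tfrac{1}{2} - p$, where $p := P\{\hat Y_i > \hat Y_j \mid Y_i \in D^+,\ Y_j \in D_g^+\}$. Definitions \ref{def:aeg_2} and \ref{def:aeg_3} will land on this value directly; Definition \ref{def:aeg_1} will land on it after a short geometric argument identifying the parametric curve $(x(t),y(t))$ with an ROC curve.

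First I would establish Definition \ref{def:aeg_3} $=$ Definition \ref{def:aeg_2}. This is purely combinatorial. With the standard $\tfrac{1}{2}$-weighting of ties, the Mann--Whitney $U$ statistic is
\[
\textup{MWU}(D_g^+, D^+) \;=\; \sum_{j \in D_g^+}\sum_{i \in D^+}\Bigl(\mathbf{1}[\hat Y_i > \hat Y_j] + \tfrac{1}{2}\mathbf{1}[\hat Y_i = \hat Y_j]\Bigr),
\]
so dividing by $|D_g^+|\,|D^+|$ gives the empirical value of $p$, and the identity of the two formulas for Positive AEG is immediate after substitution.

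Next I would establish Definition \ref{def:aeg_2} $=$ Definition \ref{def:aeg_1}. The key observation is that the parametric curve $\bigl(x(t),y(t)\bigr)$ is precisely the ROC curve of the auxiliary binary classification problem in which points of $D^+$ play the role of ``positives'' and points of $D_g^+$ play the role of ``negatives,'' with $\hat Y$ as the score: $x(t)$ is the rate at which subgroup-positive points exceed threshold $t$, i.e.\ the false-positive rate of the auxiliary problem, and $y(t)$ is the corresponding true-positive rate. By the classical Bamber / Hanley--McNeil identity, the area under this ROC curve equals exactly $p$, so $\int y\,dx = p$ and $\int x\,dx = \tfrac{1}{2}$, whence $\int(y-x)\,dx = p - \tfrac{1}{2}$. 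Accounting for the orientation of the parameterization (as $t$ increases, $x(t)$ decreases from $1$ to $0$, so the Stieltjes measure $dx(t)$ on $[0,1]$ carries the opposite sign from integrating against $dx$ along $[0,1]$), the integral in Definition \ref{def:aeg_1} equals $\tfrac{1}{2} - p$, matching Definitions \ref{def:aeg_2} and \ref{def:aeg_3}.

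The main obstacle is the bookkeeping in this last step: fixing a single orientation convention for the integral in Definition \ref{def:aeg_1} and a single tie-breaking convention (either assuming continuous score distributions on $D^+$ and $D_g^+$, or uniformly using the $\tfrac{1}{2}$-weighting in both the MWU count and in the step-functions $x(t),y(t)$) so that all three quantities coincide with the same sign rather than differing by $\pm 1$. Once these conventions are pinned down, the identification of $(x(t),y(t))$ as an ROC curve and the MWU-as-AUC identity close the chain Definition \ref{def:aeg_1} $=$ Definition \ref{def:aeg_2} $=$ Definition \ref{def:aeg_3}. The Negative AEG case follows by the same argument with $D^-$ and $D_g^-$ in place of the positive sets.
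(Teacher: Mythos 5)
Your proposal is correct and follows essentially the same route as the paper: the equivalence of Definitions \ref{def:aeg_2} and \ref{def:aeg_3} is read off from the definition of the Mann--Whitney $U$ statistic, and the equivalence with Definition \ref{def:aeg_1} is the classical AUC--MWU identity, which the paper simply cites (Section 2 of the AUC reference) while you spell it out via the auxiliary ROC curve with $D^+$ as positives and $D_g^+$ as negatives. Your explicit handling of the orientation of $dx(t)$ and of ties is a welcome elaboration of conventions the paper leaves implicit, but it does not change the argument.
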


\begin{proof}
The equivalence of definitions \ref{def:aeg_2} and \ref{def:aeg_3} follow from the definition of the Mann-Whitney $U$ test statistic. The equivalence of \ref{def:aeg_3} and \ref{def:aeg_1} is shown identically to earlier proofs of the equivalence of AUC and MWU scores (see, for example Section 2 of \cite{auc}).
\end{proof}

\subsubsection{Properties of Average Equality Gap}

By Definition \ref{def:aeg_1}, we see that the Average Equality Gap as a metric can range in value from -0.5 to 0.5. 
At each of these extremes, it represents a different type of bias where the TPR of the subgroup is consistently higher or lower, respectively, than that of the background.

The optimal value of the Average Equality Gap metric is 0, which means the subgroup and background distributions have identical means.

Definition \ref{def:aeg_1} leads to the following corollaries:

\begin{theorem} \label{th:mp1} If Equality of Opportunity holds for every threshold then the Average Equality Gap will be 0.
\end{theorem}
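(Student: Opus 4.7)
The plan is to derive the result directly from Definition \ref{def:aeg_1}, which expresses the Positive Average Equality Gap as the integral $\int_0^1 (y(t) - x(t))\, dx(t)$, where $x(t) = \mbox{TPR}(D_g^+)$ at threshold $t$ and $y(t) = \mbox{TPR}(D^+)$ at threshold $t$. Since Equality of Opportunity at threshold $t$ is precisely the statement that the true positive rate of the subgroup coincides with that of the background at $t$, the hypothesis that it holds at \emph{every} threshold is equivalent to saying $x(t) = y(t)$ for all $t \in \mathbb{R}$.

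From here the argument is one line: the integrand $y(t) - x(t)$ vanishes identically, so the integral is zero, giving Positive AEG $= 0$. The identical argument with true negative rates in place of true positive rates, substituting $D^-$ and $D_g^-$, yields Negative AEG $= 0$. Thus under Equality of Opportunity at every threshold, both components of AEG vanish.

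The only subtlety worth flagging is a measure-theoretic one: for empirical distributions, $x(t)$ and $y(t)$ are step functions, so the integral $\int (y(t) - x(t))\, dx(t)$ should be interpreted as a Riemann--Stieltjes (or Lebesgue--Stieltjes) integral with respect to the measure induced by $x(t)$. Even in this discrete setting, however, pointwise equality of the integrand with zero forces the integral to be zero, so no real analytic work is needed. I do not expect any genuine obstacle here; the result is essentially an unpacking of the definition, and the theorem is best read as a sanity check confirming that AEG is a legitimate relaxation of the Equality of Opportunity constraint.
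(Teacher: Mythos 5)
Your argument is exactly the paper's: Equality of Opportunity at every threshold means $x(t) = y(t)$ for all $t$ in Definition \ref{def:aeg_1}, so the integrand vanishes and the AEG is 0. The added remarks on the Negative AEG and on interpreting the integral in the Stieltjes sense are fine but do not change the route; the proposal is correct and matches the paper's proof.
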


\begin{proof} If Equality of Opportunity holds for every threshold then $x(t) = y(t)$ for all $t$ in Definition \ref{def:aeg_1}. Thus the Average Equality Gap will be 0. 
\end{proof}

\begin{theorem} \label{th:mp2} If the Average Equality Gap is 0 then Equality of Opportunity must hold for some non-trivial threshold 0 < t < 1.
\end{theorem}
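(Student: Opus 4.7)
The plan is to apply an intermediate-value argument to the gap function $f(t) = y(t) - x(t)$, where $x(t)$ and $y(t)$ denote the true positive rates of the subgroup and background at threshold $t$ as in Definition~\ref{def:aeg_1}.

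First I would record the boundary behavior. At $t=0$ every example is classified positive, so $x(0) = y(0) = 1$; at $t=1$ every example is classified negative, so $x(1) = y(1) = 0$. Hence $f(0) = f(1) = 0$, and the boundary thresholds are the ``trivial'' ones that the statement explicitly excludes. If $f \equiv 0$ on $[0,1]$, then Equality of Opportunity holds at every threshold, and in particular at $t = 1/2$, so the conclusion is immediate.

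Second, I would argue by contradiction that $f$ must vanish somewhere in the open interval. Assume $f(t) \neq 0$ for all $t \in (0,1)$. Since $x$ and $y$ are continuous (assuming continuous score distributions; the discrete case is discussed below), $f$ is continuous on $(0,1)$ and therefore has constant sign there. Without loss of generality, $f > 0$ on $(0,1)$. The AEG integral from Definition~\ref{def:aeg_1} is then
\[
\textup{AEG} = \int_0^1 f(t) \, dx(t),
\]
and the integrand is strictly positive wherever $x(t)$ is strictly increasing. Because $x(0)=1$ and $x(1)=0$, the measure $dx(t)$ is nontrivial on $(0,1)$, which forces $\textup{AEG} \neq 0$, contradicting the hypothesis. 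Hence there exists $t^\ast \in (0,1)$ with $f(t^\ast) = 0$, i.e., $x(t^\ast) = y(t^\ast)$, which is Equality of Opportunity at a non-trivial threshold.

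The main obstacle I anticipate is the discrete case: when the classifier takes finitely many distinct scores, both $x$ and $y$ are piecewise constant, and $f$ may change sign across a jump without actually equaling zero at any point. I would handle this by either restricting the theorem statement to classifiers with continuous score distributions (the typical regime in which AEG is used), or by working with the right-continuous versions of $x(t)$ and $y(t)$ and showing that immediately after the relevant jump the two rates agree. A secondary subtlety is ruling out the degenerate case where $x$ is constant on a sub-interval of $(0,1)$ but $f$ is nonzero there; the argument above already handles this because we only need positivity of $f$ on the support of $dx(t)$, and absent a jump of $x$ there is by definition no threshold-change to consider.
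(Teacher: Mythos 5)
Your argument is correct and is essentially the paper's proof: the paper simply invokes the mean value theorem for integrals (its ``Average Value Theorem'') applied to $\int_0^1 \left( y(t) - x(t) \right) dx(t)$, and your constant-sign contradiction argument is just the standard proof of that theorem unpacked. Your added care about the boundary values, the measure $dx(t)$, and the discrete-score caveat goes beyond the paper's one-line proof, which tacitly assumes continuous score distributions, but the underlying idea is the same.
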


\begin{proof} Equality of Opportunity always trivially holds if you threshold the classifier at 0 or 1. However, if the Average Equality Gap is 0 then, by the Average Value Theorem for Integrals, we must also have that $x(t) = y(t)$ for some $0 < t < 1$ where $x(t), y(t)$ are defined as in Definition \ref{def:aeg_1}.
\end{proof}
\subsection{Comparison of Metrics} \label{sec:comparison}

In this section, we discuss the different strengths and weaknesses of the AUC-based metrics and the AEGs at tackling a range of common biases. Table \ref{table:dist2} outlines simulated data distributions that exhibit common biases and reports our newly introduced metrics for bias. \\

\begin{enumerate}[label=\textbf{\Alph*},leftmargin=0.5cm]
\myitem{Small score shift}
This type of bias occurs when a machine learning classifier outputs a consistently higher (or lower) score for a subgroup than for the overall data distribution.
We refer to this as a ``small'' score shift as the shift is not to the extent as to confuse negative examples from the subgroup with positive examples from the background distribution (or vice versa). 
Thus, it is still possible to choose a single threshold that achieves a perfect separation of positive and negative examples for both the subgroup and the background. As we can see from Table \ref{table:dist2}, the Average Equality Gap (AEG) is the only metrics which pick up this subtle form of bias.
\label{p:small_shift}\\

\myitem{Large score shift} This type of bias is similar to the previous one, but the shift is now large enough that negative examples from the subgroup are mis-ordered with positive examples from the overall distribution. 
Thus, it is no longer possible for a single threshold to separate positive and negative examples from the subgroup or the background. 
A threshold that is ideal for the background distribution would result in false positives within the subgroup, captured by the low value in BPSN AUC. 
This score shift is also detected by both AEG metrics, which are higher here than in \ref{p:small_shift} capturing the greater extent of the bias in this scenario. \label{p:large_shift} \\

\myitem{Score shift and size skew (more positive)} The bias shift in this distribution is the same as the previous one, but the available subgroup data is skewed toward having many more positive than negative examples.
Again, this picked up by the BPSN and BNSP AUCs and AEGs, exactly as in \ref{p:large_shift}, demonstrating the metrics' robustness to imbalanced datasets. \label{p:large_shift_skew}\\

\myitem{Left score shift} This type of bias is similar to \ref{p:large_shift}, except that the subgroup scores are shifted downward (to the left). 
Here, the bias is captured again by negative values in both AEGs, indicating the downward shift in scores, and by the low value in BNSP AUC, indicating the likelihood of false negatives for the subgroup at thresholds ideal for the background. \label{p:left_shift}\\

\myitem{Low subgroup separability} This type of bias represents a classifier which simply underperforms on a subgroup relative to the background distribution, resulting in low separability within the subgroup only.
The intermingling of positive and negative examples from the subgroup is captured in the Subgroup AUC metric. 
Because this also implies a shift in one or both of the subgroup distributions, it will also be captured by the AEG  metrics. 
The sign of the AEG metric corresponds to the direction of the shift, so in this example, the -0.48 corresponds to the left-shift of the positive examples in the subgroup and the 0.48 corresponds to the right shift of the negative examples in the subgroup.
Note that the shifts are not so large as to cause overlap between negative examples in the subgroup and positive examples in the background (or vice versa), so the BPSN and BNSP AUCs are both still high. \label{p:lsubgroup} \\

\myitem{Wide subgroup score range without overlap} \label{p:wide_subgroup} This distribution represents the scenario where the classifier produces a higher variance of scores for the subgroup than the background distribution, but the means for these distributions are the same.
None of the bias metrics introduced here perceive any bias in this case.
Whether this is considered problematic bias depends on the use case, if it is, you'll want to use more sensitive metrics than those introduced in this work. \\

\myitem{Wide subgroup score range with overlap} \label{p:even_wider_subgroup} This distribution is similar to \ref{p:wide_subgroup}, except that the subgroup distributions are now so wide that they overlap with each other and with
the opposite class background distributions.
The AUC metrics, especially Subgroup AUC, perceive this type of bias. \\
\end{enumerate}

In summary, the different metrics can be used in combination to diagnose different types of bias. Of course, these scenarios are not an exhaustive list of all possible types of bias, but they help to illuminate some of the differences between these metrics.

Overall, Subgroup AUC and BPSN and BNSP AUCs identify any bias significant enough to cause mis-orderings between negative and postive examples, i.e. bias that interferes with selecting a single threshold that works similarly across groups.
Subgroup AUC highlights when those mis-orderings are caused by poor model understanding within the subgroup, and BPSN and BNSP AUCs highlight when the misorderings are caused by score shifts.
The AEGs go beyond the AUCs to identify bias in the distribution itself, even when (non-trivial) perfect thresholding is possible. 
Both AEGs and BPSN and BNSP AUCs provide insight into the directionality of score shifts. 

It's important to note that the correct handling of subtle variations in distributions must be decided on case by case basis.
Some cases of subtle variations that the metrics highlight, such as \ref{p:small_shift}, may not be considered problematic bias in all model use cases.
Other, even more subtle variations like \ref{p:wide_subgroup}, do not trigger any of these metrics.
Generally, if the AUC values reveal variation across groups, that likely indicates a problematic bias as it confirms mis-orderings around the relevant subgroups.
If the AEGs reveal variation across groups, that \emph{may} indicate problematic bias, however that bias may not be severe enough to cause mis-orderings.
And, as with any other suite of metrics, it's always possible that there is subtle bias that the metrics cannot detect.

\begin{table*}[htb]
\small
\begin{tabular}{ | l | c | c | c | c | c | c | c | c |}
\hline
& \ref{p:small_shift} &
\ref{p:large_shift} &
\ref{p:large_shift_skew} &
\ref{p:left_shift} & 
\ref{p:lsubgroup} & 
\ref{p:wide_subgroup} &
\ref{p:even_wider_subgroup} \\
\hline
Description & Small right & Large right & Score shift & Large left & Low subgroup & Wide subgroup & Wide subgroup \\
& score shift & score shift & and size skew  & score shift & separability & score range & score range \\
& & & (more positive) & & & without overlap & with overlap \\
\hline
Subgroup AUC & 1.0 & 1.0 & 1.0 & 1.0 & 0.93 & 1.0 & 0.92 \\
BPSN/BNSP AUCs & 0.99/1.0 & 0.76/1.0 & 0.76/1.0 & 1.0/0.77 & 0.99/0.99 & 1.0/1.0 & 0.98/0.97\\
AEGs & 0.42/0.42 & 0.5/0.5 & 0.5/0.5 & -0.5/-0.5 & -0.48/0.48 & -0.02/0.02 & 0.03/-0.03 \\
&
\includegraphics[scale=0.25]{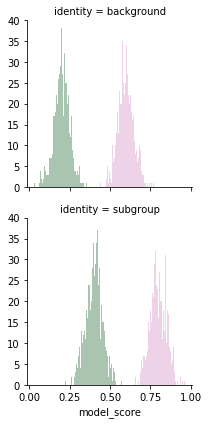} & 
\includegraphics[scale=0.25]{images/large_score_shift.png} & 
\includegraphics[scale=0.25]{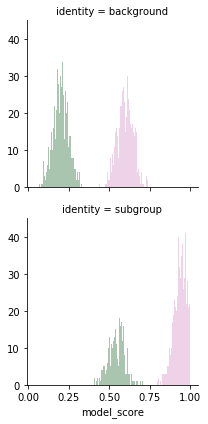} &
\includegraphics[scale=0.25]{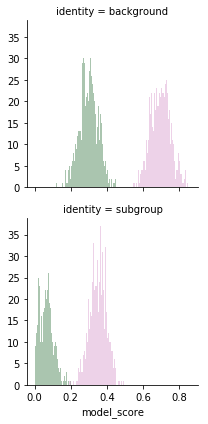} & 
\includegraphics[scale=0.25]{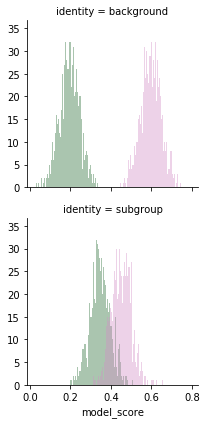} &
\includegraphics[scale=0.25]{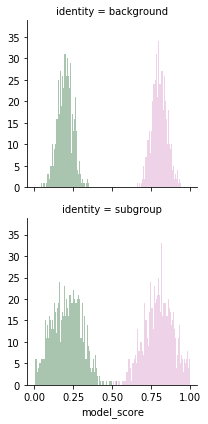} &
\includegraphics[scale=0.25]{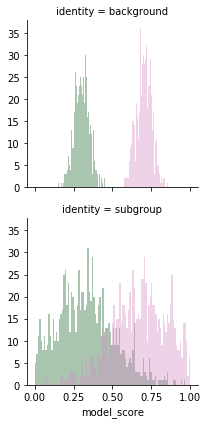} \\
\hline
\end{tabular}
\caption{Examples of score distributions demonstrating bias.}
\label{table:dist2}
\end{table*}

\section{Experimental Results}

We demonstrate this suite of metrics using the publicly available toxicity classifiers provided by the Perspective API (\cite{perspective_api}).
We use two test sets, 1) a synthetically generated, bias-focused test set following \cite{aies_2018} and 2) a large dataset of online comments with human labels for both identity and toxicity. 

\subsection{Models}
Using our metrics, we compare two versions of Perspective API's toxicity classifier, the initial TOXICITY@1 and the latest TOXICITY@6. 
TOXICITY@1 was shown to have significant unintended bias around identity words like "gay" and "transgender", both by independent analysis and by the Perspective team \cite{fp_blog}. 
TOXICITY@6 was built using the bias mitigation techniques presented in \cite{aies_2018} and, and therefore we expect to see reduced unintended bias between these two models across our new metrics.

\subsection{Synthetic Test Set} 

The synthetic dataset contains 77k examples generated from templates using 50 identity terms, 50\% toxic and 50\% non-toxic across all terms.
These examples are constructed explicitly to measure unintended bias based on identity terms. 
The examples are simple sentences that should be clearly toxic or clearly non-toxic, regardless of identity terms present.

\subsection{Synthetic Test Set Results}

In Table~\ref{tab:synthetic_data_results}, we show Subgroup AUC, BPSN AUC, BNSP AUC, Negative AEG, and Positive AEG for both TOXICITY models on the synthetic dataset.
The dataset contains 50 identity terms, here we show results for the lowest performing 20 subgroups.

\subsubsection*{Bias tends to skew towards toxicity for certain groups.} 
For TOXICITY@1, we see low values for BPSN AUC for identity terms \textit{homosexual}, \textit{gay}, and \textit{lesbian}, and to a lesser extent \textit{transgender} and \textit{heterosexual}.
This reveals the tendency of the TOXICITY@1 model to mis-associate those words with toxicity, and therefore produce high toxicity scores for non-toxic examples with these words, aligning with the findings of frequent false positives for these identity terms in \cite{fp_blog}.
This unintended bias is reduced in TOXICITY@6, but not completely eliminated.

Subgroup AUC and BNSP AUC show relatively high values across all groups, in both TOXICITY@1 and TOXICITY@6.
This emphasizes that the model is generally effective at distinguishing toxic from non-toxic examples within every group (subgroup AUC), even for the groups that show an incorrect tendency towards toxicity in the BPSN AUC discussed above.
In addition, the high BNSP values show that mis-orderings of toxic comments referring to identities that would lead to false negatives are rare.

\subsubsection*{Score distributions vary widely across groups.}
The Average Equality Gap values reveal a much more complex view of the model than the AUCs.
As expected, identities with low BPSN AUC (e.g. \textit{homosexual}, \textit{gay}, and \textit{lesbian}) also have high Negative AEG values, both indicating an upward shift (towards toxicity) of model scores for non-toxic items.
However, several identities, such as \textit{bisexual}, \textit{trans}, \textit{queer}, and \textit{black}, have high BPSN AUCs and also high Negative AEGs values.
For these groups, there is also an upward shift in model scores for non-toxic items, but it is not large enough to cause mis-orderings with toxic items. 
In addition, we see relatively large improvement in BPSN AUC values from TOXICITY@1 to TOXICITY@6, but only minimal change in the Negative AEGs for the same identities, confirming again that the upward shift of model scores for non-toxic comments with these identities is reduced but not eliminated.

\begin{table*}[htb]
    \centering
    \begin{tabular}{|c|c||c|c|}
    \hline
    \multicolumn{4}{|c|}{\textbf{Synthetic Test Set}} \\
    \hline
    \multicolumn{2}{|c||}{TOXICITY@1} & \multicolumn{2}{c|}{TOXICITY@6} \\
    \hline
    AUCs & AEGs & AUCs & AEGs \\
    \hline
    \includegraphics[scale=0.5]{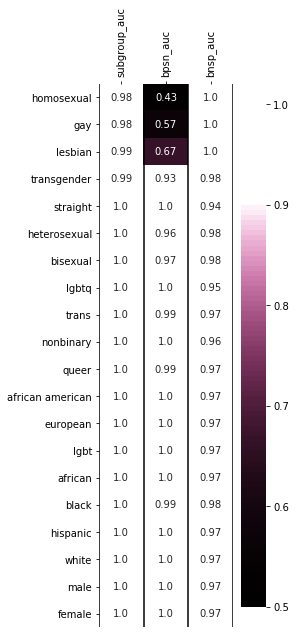} & 
    \includegraphics[scale=0.5]{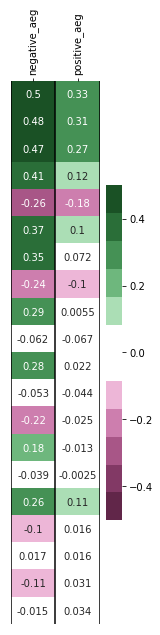} &
    \includegraphics[scale=0.5,trim=0 0 55 0, clip]{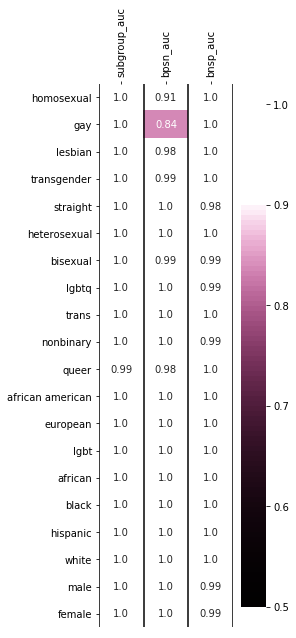} & 
    \includegraphics[scale=0.5,trim=0 0 55 0, clip]{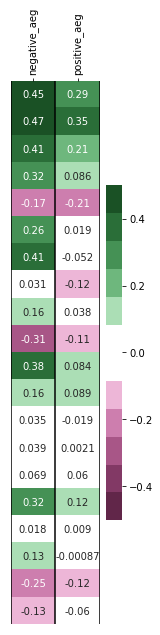}\\
    \hline
    \end{tabular}
    \caption{Comparison between TOXICITY@1 and TOXICITY@6 on the synthetic dataset.}
    \label{tab:synthetic_data_results}
\end{table*}

\subsection{Human Labeled Dataset}

\begin{table}[htb]
\small
\begin{tabular}{|p{2.8cm}|p{4.5cm}|}
\hline
\textbf{Category}  & \textbf{Identity Options}                                                                                            \\ \hline
Gender             & Male, Female, Transgender, Other gender                                                                              \\ \hline
Sexual Orientation & Heterosexual, Homosexual, Bisexual, Other sexual orientation                                                         \\ \hline
Religion           & Christian, Jewish, Muslim, Hindu, Buddhist, Atheist, Other religion                                                  \\ \hline
Race or ethnicity  & Black, White, Latino/Latina/Latinx, Other race or ethnicity                                                          \\ \hline
Disability         & Physical disability, Intellectual or learning disability, Psychiatric disability or mental illness, Other disability \\ \hline
\end{tabular}
\caption{Full list of identity options presented to crowd raters}
\label{identity_options}
\end{table}

Synthetic test sets, while useful for capturing issues not present in real data, may not provide accurate results for real scenarios with different data distributions.
In addition, synthetic sets are limited to the specific identity terms that are manually curated, and therefore are unlikely to capture the true diversity of ways that identities are discussed in real conversation.

To facilitate unintended bias evaluation on real data, we designed techniques to have humans label the identity content within real data.
We presented crowd raters with a comment and asked a set of questions including, for example, ``What genders are referenced in the comment?'' and ``What races or ethnicities are referenced in the comment?''.
For each question, raters selected the set of identities present in the comment from a provided list. 
Using human labeling for identity content allows us to capture nuanced identity content that term-based analysis would miss, such as ``same-sex marriage'' or ``people who do not believe in any god''.

The set of identities labelled by raters is not comprehensive and does not provide universal coverage. 
This set was designed to balance the coverage of identities, crowd rater accuracy, and ensure that each labeled identity has enough examples in the final data set to provide meaningful results.
The full list of identities labeled can be found in Figure~\ref{identity_options}.

This data was also labeled for toxicity using the same crowd rating guidelines as published by the Perspective API (\cite{Thain2017}, \cite{wulczyn2017ex}).
This labeling asks raters to rate the toxicity of a comment, selecting from ``Very Toxic'', ``Toxic'', ``Hard to Say'', and ``Not Toxic''.
Raters were also asked about several subtypes of toxicity, although these labels were not used for the analysis in this work.

Using these rating techniques we created a dataset of 1.8 million comments, sourced from online comment forums, containing labels for toxicity and identity.
While all of the comments were labeled for toxicity, and a subset of 450,000 comments was labeled for identity.
Some comments labeled for identity were preselected using models built from previous iterations of identity labeling to ensure that crowd raters would see identity content frequently.

Table \ref{table:toxicity_percent} shows the toxicity percentage for a selection of identities, illustrating that there is an imbalance in toxicity between different identities, emphasizing the value of metrics that are robust to these data skews.

\begin{table}[htb]
\begin{tabular}{|l|c|c|}
\hline
\textbf{Subgroup} & \textbf{Count} & \textbf{Percent Toxic}   \\ \hline
all comments &  1,804,875 &   8.00\%        \\
male         &  44,484    &   15.03\%       \\
female       &  53,429    &   13.68\%       \\
transgender  &  2,499     &   21.29\%       \\
heterosexual &  1,291     &   22.77\%       \\
homosexual   &  10,997    &   28.38\%       \\
\hline
\end{tabular}
\caption{Percentage of toxic comments by identity in the human labeled dataset for a selection of identities.}
\label{table:toxicity_percent}
\end{table}

To enable further research in this field, this entire dataset and annotations will be released under a Creative Commons license at \url{https://git.io/fhpcC}.

\subsection{Human Labeled Dataset Results}

\begin{table*}[htb]
    \centering
    \begin{tabular}{|c|c||c|c|}
    \hline
    \multicolumn{4}{|c|}{\textbf{Human Labeled - Short Comments Only}} \\ 
    \hline
    \multicolumn{2}{|c||}{TOXICITY@1} & \multicolumn{2}{c|}{TOXICITY@6} \\
    \hline
    AUCs & AEGs & AUCs & AEGs \\
    \hline
    \includegraphics[scale=0.5,trim=0 0 0 0,clip]{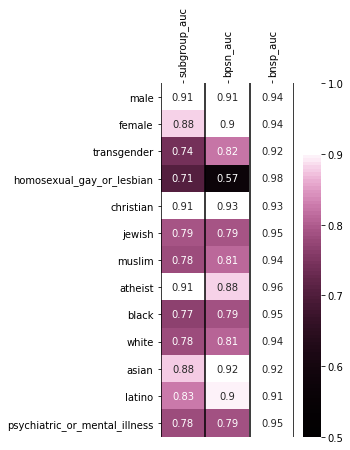} & 
    \includegraphics[scale=0.5,trim=0 0 0 0,clip]{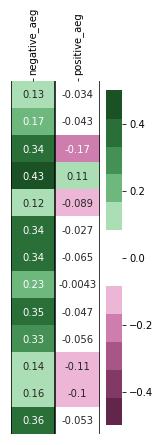} &
    \includegraphics[scale=0.5,trim=0 0 55 0,clip]{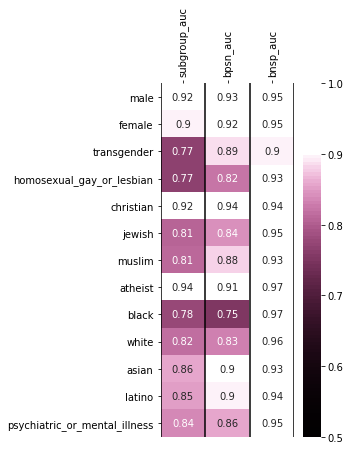} & 
    \includegraphics[scale=0.5,trim=0 0 55 0,clip]{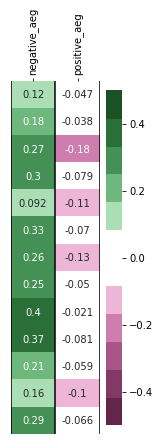} \\
    \hline
    \hline
    \multicolumn{4}{|c|}{\textbf{Human Labeled - All Comments}} \\  
    \hline
    \multicolumn{2}{|c||}{TOXICITY@1} & \multicolumn{2}{c|}{TOXICITY@6} \\
    \hline
    AUCs & AEGs & AUCs & AEGs \\
    \hline
    \includegraphics[scale=0.5,trim=0 0 0 0,clip]{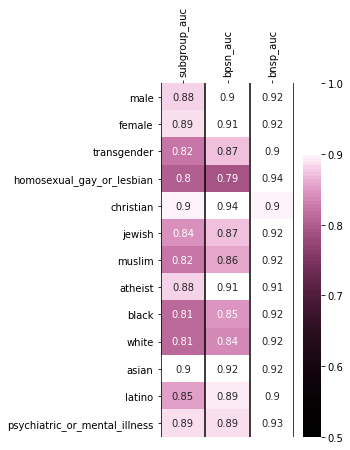} & 
    \includegraphics[scale=0.5,trim=0 0 0 0,clip]{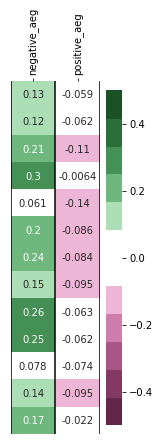} &
    \includegraphics[scale=0.5,trim=0 0 55 0,clip]{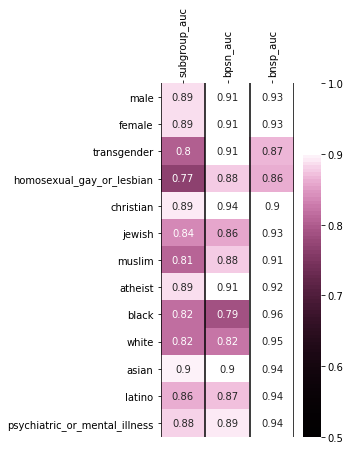} & 
    \includegraphics[scale=0.5,trim=0 0 55 0,clip]{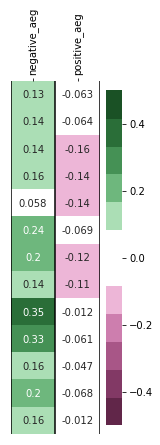} \\
    \hline
    \end{tabular}
    \caption{Comparison between TOXICITY@1 and TOXICITY@6 short comments and all comments in the human labeled dataset.}
    \label{tab:real_data_results}
\end{table*}

\begin{table}[htb]
    \centering
    \begin{tabular}{|c|c|}
    \hline
    \multicolumn{2}{|c|}{\textbf{Short Comments Only}} \\
    \hline
    \textbf{TOXICITY@1} & \textbf{TOXICITY@6} \\
    \hline
    \textbf{Background} & \textbf{Background}\\
    \includegraphics[scale=0.20]{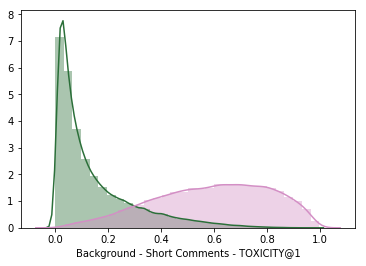} &
    \includegraphics[scale=0.20]{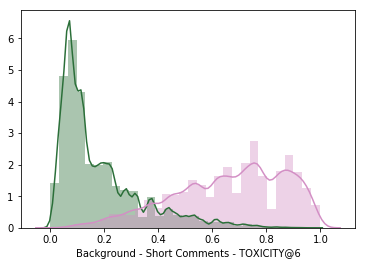}\\
    \textbf{homosexual\_gay\_or\_lesbian} & \textbf{homosexual\_gay\_or\_lesbian} \\
    \includegraphics[scale=0.20]{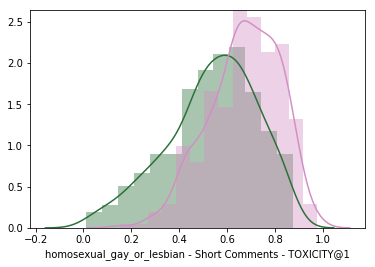} & 
    \includegraphics[scale=0.20]{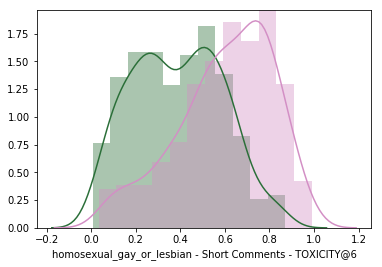}\\
    \hline
    \hline
    \multicolumn{2}{|c|}{\textbf{All Comments}} \\
    \hline
    \textbf{TOXICITY@1} & \textbf{TOXICITY@6} \\
    \hline
    \textbf{Background} & \textbf{Background}\\
    \includegraphics[scale=0.20]{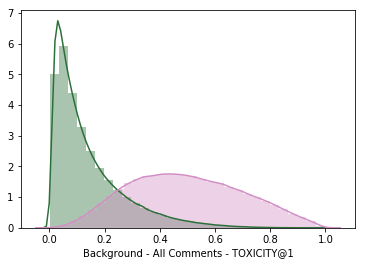} &
    \includegraphics[scale=0.20]{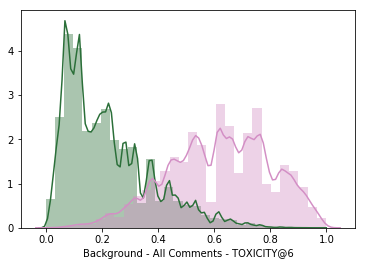} \\
    \textbf{homosexual\_gay\_or\_lesbian} & \textbf{homosexual\_gay\_or\_lesbian} \\
    \includegraphics[scale=0.20]{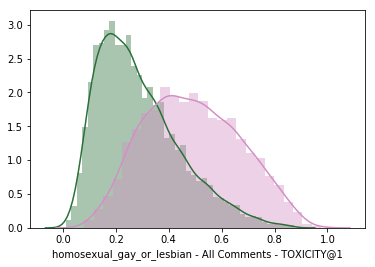} & 
    \includegraphics[scale=0.20]{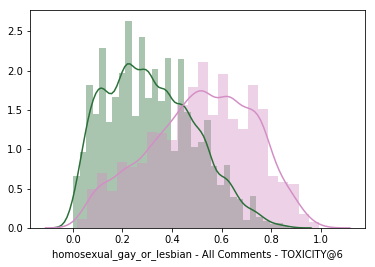} \\
    \hline
    \end{tabular}
    \caption{Score distributions comparison for comments labeled as containing identity \textit{homosexual} for short comments only and all comments, comparing TOXICITY@1 and TOXICITY@6.}
\label{table:bias_distribution}
\end{table}

Applying the AUC and AEG metrics to this real dataset reveals several new insights about the two toxicity models. 
Table \ref{tab:real_data_results} compares results for both TOXICITY@1 and TOXICITY@6 on all metrics, for both short comments (less than 100 characters) and all comments.

We present results on short comments separately because,
according to \cite{fp_blog},
the bias mitigation implemented between TOXICITY@1 and TOXICITY@6 focused on short comments. And, because, as we will see, results vary between short and long comments.
The identities shown in Table \ref{tab:real_data_results} are all identities that contained more than 100 examples of short comments. 
Walking through Table \ref{tab:real_data_results}, we highlight several important findings.

\subsubsection*{Real data reveals more unintended bias than synthetic data.}
Comparing the real data results to the synthetic data results in Table~\ref{tab:synthetic_data_results}, we find lower values and more variation across identity subgroups in the real data than we do in synthetic data.
However, the identity terms that show the lowest model performance in the synthetic data are aligned with the identity groups that show the lowest performance in the real data.
The synthetic data is intentionally very simple, so it is best at revealing very large discrepancies in performance that are tied very narrowly to specific identity terms, while the real data is much more broad and nuanced, but also potentially noisier.

\subsubsection*{Bias tends to skew towards toxicity.}
Across both models and both short and long comments, we see lower values for Subgroup AUC and BPSN AUC and higher values for BNSP AUC.
We also tend to see positive values for Negative AEG and negative values for Positive AEG.
Together, all of these metrics indicate that the models have a tendency to skew non-toxic comments that discuss identity towards toxicity.
Given that the domain is online discussions, the tendency towards toxicity for identity-related discussion is aligned with general societal perceptions of online conversation, and therefore may not be surprising.
In fact, we even see that the identities which have the most bias towards toxicity are identities that are frequently attacked in online discussion, such as \textit{black}, \textit{homosexual\_gay\_or\_lesbian}, and \textit{jewish} and the identities which have the least bias towards toxicity are \textit{male}, \textit{christian}, and \textit{asian}, again aligned with societal stereotypes. 
From these results and from what we know of society, we can conclude that for modeling toxicity in online conversation, the risk for bias skewed towards toxicity for some groups is high.
However, it's still important to include all metrics, as bias can appear in unexpected places.

\subsubsection*{Short comments show expected bias mitigation.} 
Looking at the AUCs and AEGs for short comments, TOXICITY@6 outperforms TOXICITY@1 in most identities and metrics.
This is expected since TOXICITY@6 includes data based on the bias mitigation methods described in \cite{aies_2018} and \citep{fp_blog}.
That work found that imbalances in toxicity in the training data for certain identity words were a major source of bias, and that those imbalances were more prevalent among short comments.
To mitigate the bias, the authors added additional training data to even out the prevalence of toxicity for those identity words, adding the most additional training data to short comments, where the imbalance was largest, so it is unsurprising that the mitigation would have more impact on short comments.
Finally, the identity groups which do not improve in these metrics, such as \emph{black} and \emph{asian}, were not associated with any additional non-toxic training data.

For all comments, we see much less change between TOXICITY@1 and TOXICITY@6, which raises the potential that the bias mitigation did not generalize to longer comments.
However, this dataset also reveals that there was \emph{more} unintended bias among short comments in TOXICITY@1, exactly as predicted by the training data analysis in \cite{aies_2018}.
This is evident by the fact that that for short comments the AUC values are lower (especially Subgroup AUC and BPSN AUC) and all metrics have more variation across groups.

Table \ref{table:bias_distribution} visualizes the larger bias for short comments and the improvement between TOXICITY@1 and TOXICITY@6 for the identity group \textit{homosexual\_gay\_or\_lesbian}. 
Each image shows the score distributions for non-toxic comments (green arc, towards the left) and toxic comments (purple arc, towards the right).
The top charts show the score distributions for both models on short comments and the lower charts shows all comments.
For short comments and TOXICITY@1, the non-toxic and toxic distributions for the identity group are almost entirely overlapping with each other, and with the background toxic distribution, illustrating the bias we see in the low Subgroup AUC, BPSN AUC, and Negative AEG. 
For short comments and TOXICITY@6, the non-toxic distribution for the identity group is shifted to the left (lower scores) showing some improvement, which aligns with the improvement we see in those metrics.

The metrics and distributions show that the bias mitigation brought unintended bias in short comments, where it was strongest, in line with unintended bias overall, where it was less visible.
These results mirror the bias-mitigating training data adjustment, where toxicity percentages among short comments with certain identity terms were brought in line with percentages for other words.
Overall, this evaluation on real data reveals that the bias mitigation between TOXICITY@1 and TOXICITY@6 impacts the model as designed, but there is still room for improvement.

\section{Conclusion and Further Work}

We introduced a new suite of metrics for unintended bias, based on ROC-AUC and Mann-Whitney $U$ scores. 
These metrics provide a detailed and nuanced view of the types of bias present in a model and overcome limitations of similar metrics like Equality Gap in that they are threshold agnostic.

We developed and applied an evaluation method for our introduced metrics using a variety of example illustrative distributions. This highlights the differences in various metric behaviors for different kinds of bias. 
We then demonstrated our metrics using existing toxicity classifiers that are provided by the Perspective API~\cite{perspective_api}.
This involved adapting existing synthetic datasets used for unintended bias measurement of text classifiers. 

Finally we extend beyond the synthetic test set methodology, leveraging the improved nuance of the newly introduced metrics by crowdsourcing a large new corpus of nearly 2 million annotations of comments, providing one of the first studies of unintended bias based on identity references in text classification on real data. 
Our evaluation using this new dataset highlights how the new metrics also reveal new challenges for bias mitigation, highlighting that bias is still present in models that have undergone some bias mitigation.

Further work in this area could study:

\begin{itemize}
\item Developing effective strategies for choosing optimal thresholds to minimize unintended bias. While the threshold agnostic metrics we present provide an understanding of bias in the underlying model scores, this does not mean that all thresholds will have the same results.
\item Evaluating the relative benefit of the newly introduced dataset compared to sub-string matching of terms that reference an identity. 
\item A more systematic definition of the kinds of synthetic distributions that can be used to evaluate and categorize metrics for unintended bias. 
\item Developing a full taxonomy of different possible biases and a systematic approach for these metrics to be used in their diagnosis. 
\end{itemize}

\bibliographystyle{ACM-Reference-Format}
\bibliography{main}
\end{document}